\def\vu{\boldsymbol{u}} 
\def\vx{\boldsymbol{x}} 
\def\vv{\boldsymbol{v}} 
\def\vs{\boldsymbol{s}} 
\def\setX{\mathcal{X}}
\def\setV{\mathcal{V}}
\def\setE{\mathcal{E}}
\def\setXdense{\setX_{dense}} 
\newtheorem{problem}{Problem}
\newtheorem{theorem}{Theorem}
\def\gve{\mathcal{G(\setV,\setE)}}
\title{\LARGE \bf Dispersion-Minimizing Motion Primitives \\for Search-Based Motion Planning
}
\author{Laura Jarin-Lipschitz, James Paulos, Raymond Bjorkman, and Vijay Kumar
\thanks{We gratefully acknowledge the support from ARL Grant DCIST CRA W911NF-17-2-0181, NSF Grant CNS-1521617, ARO Grant W911NF-13-1-0350, ONR Grants N00014-20-1-2822 and ONR grant N00014-20-S-B001, Qualcomm Research, and C-BRIC, a Semiconductor Research Corporation Joint University Microelectronics Program  program cosponsored by DARPA. The first author acknowledges support from the NSF Graduate Research Fellowship Program. We would also like to thank Mathew Halm for his feedback.}
\thanks{The authors are with the GRASP Laboratory, University of Pennsylvania, PA, 19104, USA {\tt\footnotesize \{laurajar, jpaulos, raybjork,  kumar\}@seas.upenn.edu.}}%
}
\begin{document}

\maketitle

\begin{abstract}

Search-based planning with motion primitives is a powerful motion planning technique that can provide dynamic feasibility, optimality, and real-time computation times on size, weight, and power-constrained platforms in unstructured environments.  However, optimal design of the motion planning graph, while crucial to the performance of the planner, has not been a main focus of prior work. This paper proposes to address this by introducing a method of choosing vertices and edges in a motion primitive graph that is grounded in sampling theory and leads to theoretical guarantees on planner completeness.  By minimizing dispersion of the graph vertices in the metric space induced by trajectory cost, we optimally cover the space of feasible trajectories with our motion primitive graph. In comparison with baseline motion primitives defined by uniform input space sampling, our motion primitive graphs have lower dispersion, find a plan with fewer iterations of the graph search, and have only one parameter to tune.

\end{abstract}

\section{Introduction}
Search-based planning with motion primitives is a motion planning technique that is especially performant on small, size, weight, and power-constrained platforms in unknown and unstructured environments. It has been widely adapted for planning for a variety of systems, including autonomous vehicles \cite{Pivtoraiko2009} \cite{Likhachev} \cite{Sun2020}, robotic arms \cite{Likhachev2010}, Micro Aerial Vehicles (MAVs) \cite{Pivtoraiko2013} \cite{Liu2017} \cite{Dharmadhikari2020}, and multi-robot systems \cite{Thakur2013}. However, prior work does not provide a focus on designing the graph in order to optimize planner performance. In this work, we focus on optimizing the design of the motion planning graph for computation time and planner completeness.


\begin{figure}
	\centering
	\includegraphics[width=0.99\columnwidth]{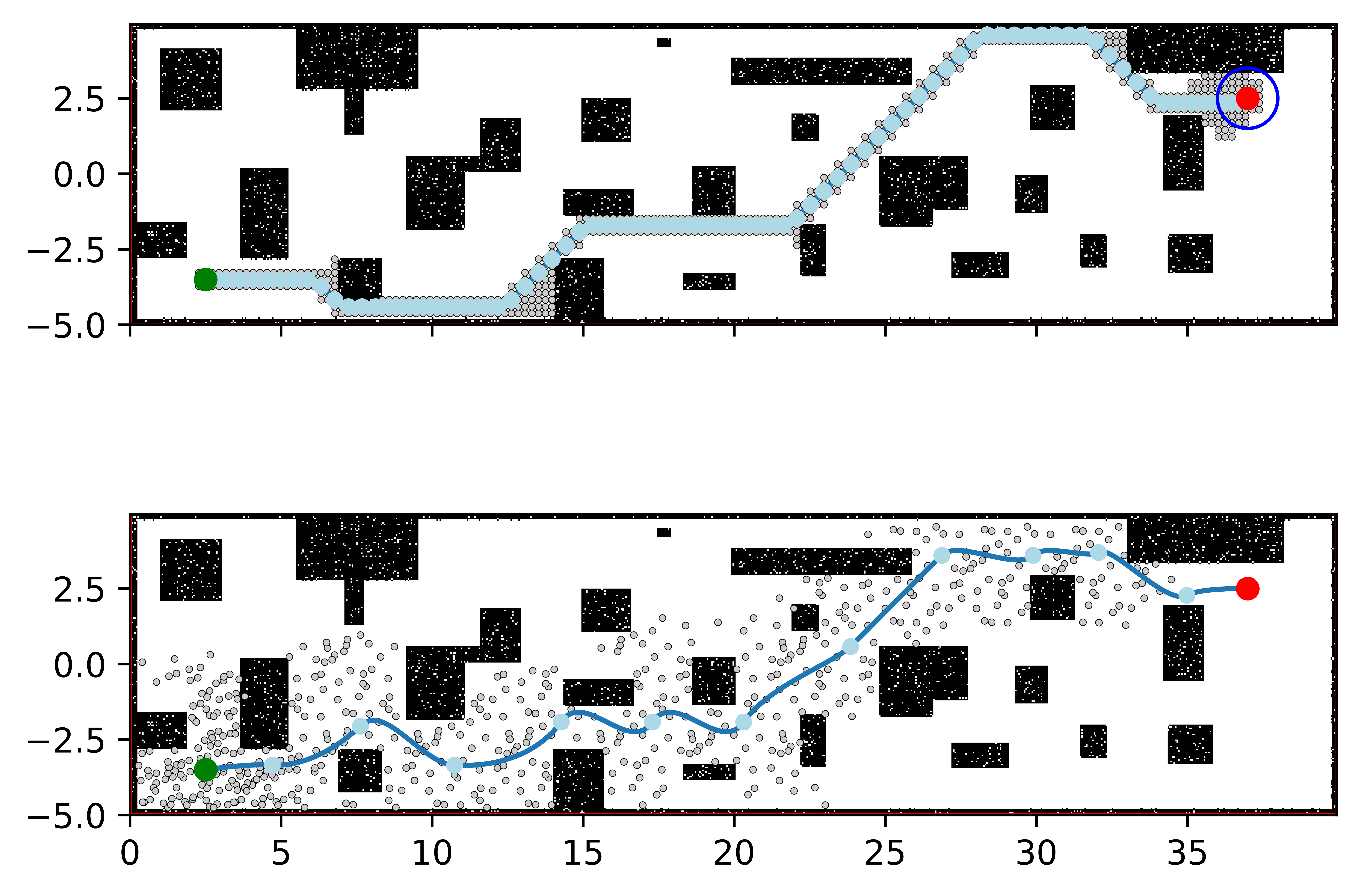}
	\caption{This figure shows the graph search planning results compared between uniform input sampling (top) and our method (bottom). They share the same dynamical system, the quadrotor double integrator, and the same state and input constraints. In order to compare them, we show an example with approximately equally optimal plans (ours is 3.5\% more optimal), but our method requires 1.74x fewer collision checks (iterations through the graph search).}
	\label{fig:planning2}
	\vspace{-5mm}
\end{figure}

In order to perform search-based motion planning with motion primitives, any approach must address the question of which states (vertices) and trajectories (edges) compose the planning graph. However, the more vertices and edges in the planning graph, the more the computational cost of our graph search will grow. Prior work relies on motion primitives designed manually \cite{Likhachev2010}, derived from coarse, regular state discretization \cite{Pivtoraiko2013} \cite{Ljungqvist2017}, or obtained by sampling constant inputs \cite{Liu2017}. Uniform/regular sampling in the state space is computationally inefficient due to the `curse of dimensionality:' sampling in possibly high-dimensional state spaces scales exponentially with the number of dimensions, which especially inhibits usage for realistic robotic systems that must include velocities in their state. Alternatively, sampling in the generally lower dimensional input space as in \cite{Liu2017} uses computation of forward trajectory simulation which can be fast online. However, given that the planning query is posed in the state space, the selection of a satisfactory set of primitives from the input space is not obvious for nontrivial system dynamics. 

To optimally place our vertices, we look to recent work \cite{Palmieri2019} that presents an optimization algorithm to generate approximately minimum dispersion samples in the metric space induced by trajectory cost for use in planning.  \emph{Dispersion} \cite{nied} \cite{LaValle2006} is a metric from sampling theory that quantifies the largest region of a metric space that does not contain a sample (vertex). However, their method is only applicable to `driftless' systems, meaning that it cannot apply to real-world dynamical systems with momentum. Additionally, their planner is confined to fixed size configuration spaces, making it again unsuitable for robots in the wild which require long range maneuvers. This leads us back to the advantage of search-based planning with motion primitives. By using dispersion optimization to select a \emph{tiled} graph of motion primitives, offline, we are able to create a fast, online graph search-based planner for real robotic systems. 


The main contributions of the paper are as follows: 
\begin{enumerate}
    \item A computationally feasible offline algorithm to generate minimum dispersion vertices for systems for which we can compute an optimal steer function, while respecting state and input constraints and robot dynamics. 
    \item A method to embed the minimum dispersion vertices into a search-based motion planning graph. This planner is proven to have a theoretical completeness guarantee in Theorem 1.
    \item A method for tiling the optimally sparse motion primitive graph (computed offline once per system), many times online to plan in arbitrary size configuration spaces.
    \item A computational comparison of our minimum dispersion graph planner to a baseline uniform input sampling search-based planner.

\end{enumerate}



\begin{figure}
	\centering
	\includegraphics[width=1\columnwidth]{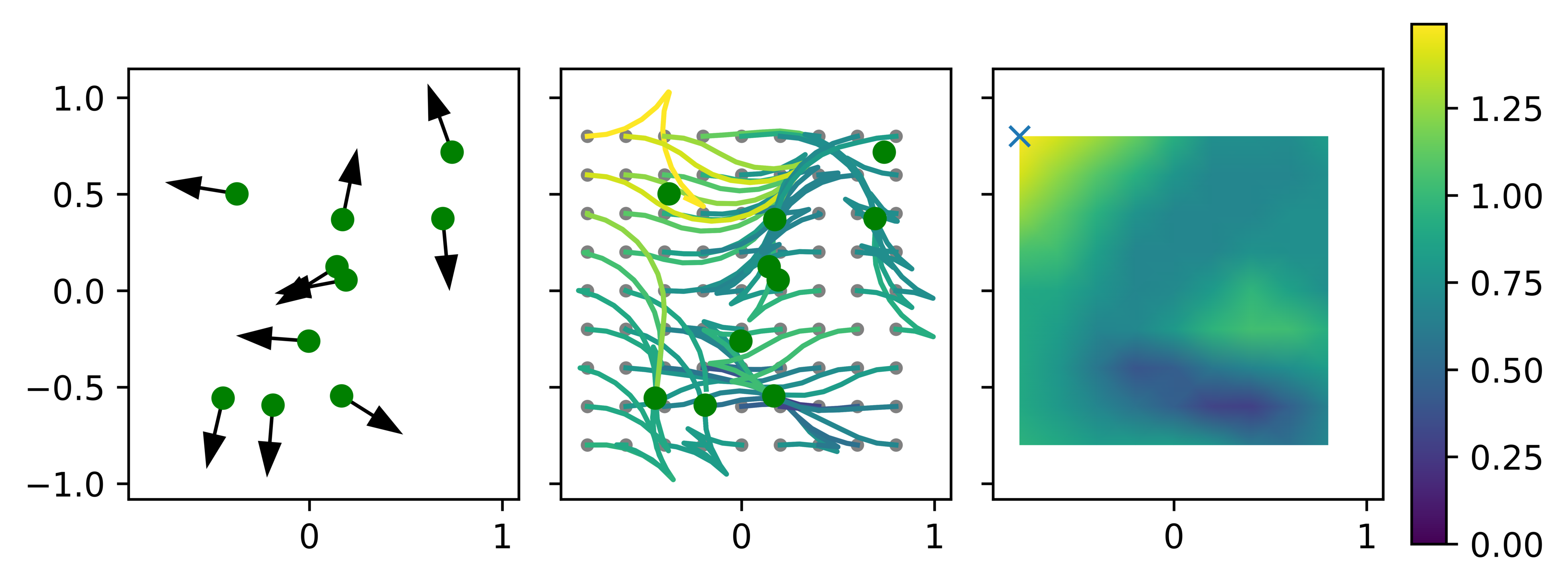}
	\caption{(left) A random set of states $\setV$ in $(x,y,\theta)$ for the Reeds-Shepp car. (center) Minimum cost trajectories from $\setV$ to grey sampled states $\setXdense$ with coordinates $(x,y,0)$ are shown with color indicating the trajectory cost. (right) The resulting cost surface over sample states $\setXdense$. The sampled state with the maximum cost is marked by the blue X and lower bounds the dispersion of $\setV$.}
	\label{fig:heatmap}
	\vspace{-5mm}

\end{figure}

\section{Problem Definition}

\subsection{Dynamical System}

We are motivated by solving motion planning problems for a dynamical system subject to configuration, state, and/or input constraints.
The dynamical system is governed by $\dot{\vx} = f(\vx, \vu)$ with state $\vx(t) \in \mathcal{X} \subset \mathbb{R}^n$ and inputs $\vu(t) \in \mathcal{U} \subset \mathbb{R}^m$.
In addition, we have a strictly positive running cost \(L\) which defines the cost functional $J(\vu(t)) = \int_{t_0}^{t_f} L(\vx(t), \vu(t), t)$ giving the cost of any admissible trajectory from time $t_0$ to time $t_f$.

We will assume that we have access to a steering function $\vu^{*}(\vx_1,\vx_2)$ which provides the minimum cost  trajectory from any state $\vx_1$ to any state $\vx_2$ in free space, ignoring any configuration constraints. For example, this could be the output from a standard trajectory optimization problem.
The steering function additionally provides the optimal cost itself, which with a slight abuse of notation we write as $J(\vx_1,\vx_2)$.
Note that in general the cost $J(\vx_1, \vx_2)$ may not be symmetric, in which case $\exists \vx_1,\vx_2 : J(\vx_1,\vx_2) \neq J(\vx_2,\vx_1)$.
$J(\vx_1,\vx_2)$ satisfies all other axioms of a metric and so is a \emph{quasimetric}. 

\subsection{Motion Primitive Graph} 
Let $\mathcal{G(\setV,\setE)}$ be a graph that is composed of a finite set of vertices $\setV \subset \setX$ and edges $ (\vv_1,\vv_2) \in \setE \subset \setX \times \setX$ Each edge represents the existence of a plan in free space to get from $\vv_1$ to $\vv_2$ with cost $J(\vv_1,\vv_2)$. A wide class of search algorithms exist for optimal planning within graphs. In this instance, since we have access to an optimal steer function, all of the edges are dynamically feasible to traverse.

\subsection{Dispersion} \label{sec:dispersion}
The distribution of finite reachable states obtained by our motion primitives may be characterized by its \emph{dispersion}.
We use the definition of the dispersion as in \cite{nied} \cite{LaValle2006}, defined over $\setX_{free}$ as follows:
\begin{align} \label{eq:disp}
    d(\setV) = \sup\limits_{\vx \in \setX_{free}} \left[ \min_{\vv \in \setV} J(\vx, \vv)\right]
\end{align}
The dispersion is the radius of the largest empty norm ball in the space induced by a metric $J$ that does not intersect a sample set $\setV$.
However, we adapt this definition to apply to our general \emph{nonsymmetric} system by ensuring that both the forward and backward cost of the quasimetric obey the dispersion property, the consequences of which are explored in more detail in Section \ref{sec:proof}.

Using this definition we can now define the non-symmetric dispersion as follows:
\begin{align}
    d(\setV) = \sup\limits_{\vx \in \setX_{free}} \left[ \min_{\vv \in \setV} [\max(J(\vx, \vv),J(\vv,\vx))]\right]
\end{align}

This is the same as equation \ref{eq:disp}, except with a new metric $\bar{J}(\vx, \vv)= \max(J(\vx, \vv),J(\vv,\vx))$.

Since the space we are dealing with is connected by feasible dynamically trajectories, the region around a state $\vx$ with a cost no more than $j$ (the ball) is the cost-limited reachable set of the system. The cost-limited reachable set is defined as follows:
\begin{align}
R(\vx,j) = \{ \vx_i \in \setX| J(\vx,\vx_i) \leq j\}
\end{align}

An equivalent and more intuitive way to describe dispersion is therefore to say that if the dispersion is no more than $d$, then:
\begin{align}
\forall \vx \in \setX, \exists \vv \in \setV, \vx \in R(\vv,d) \land \vv \in R(\vx,d)
\end{align}
For every point in the state space, there exists a trajectory to and from the sample set $\setV$ that has cost less than or equal to $d$.


Figure \ref{fig:heatmap} illustrates the concept of dispersion using the control cost for an example system. We consider Reeds-Shepp car dynamics with state $\vx \in SE(2)$ having coordinates $(x,y,\theta)$ and for which there is a well known optimal, symmetric steering function providing minimum distance trajectories from any state $\vx_1$ to $\vx_2$. The left panel shows a random set of states $\setV$ for which we will assess the dispersion. We compute the minimum cost trajectories from $\setV$ to reach points in a dense sampling of the state space $\setXdense$ and vice versa. The center panel depicts $\setXdense$ in grey along with the linking trajectories colored to indicate their cost. Note that to simplify the illustration we have selected a system with symmetric costs and are only illustrating a dense sampling over $(x,y,0)$. The costs of these trajectories define the cost surface shown in the right panel. The state marked with an X has the highest cost, and that cost lower bounds the dispersion of $\setV$. Further details on numerically approximating the dispersion are discussed in Section \ref{sec:method}.

\subsection{Dispersion-Optimal Motion Primitives}

The design of the motion primitive set has a large effect on the completeness, planning time, and motion plan quality obtained through search based planning.
Instead of manually designing motion primitives or arbitrarily choosing constant input segments we propose to select a motion primitive set that minimizes the dispersion of the reachable states.

\begin{problem} Dispersion-Optimal Motion Primitives \\
Find a set of motion primitives $\gve$ with minimum $|\setV|$ such that the dispersion of the discrete reachable states $\mathcal{X}_R \in \mathcal{X}$ is less than $d$.
\end{problem}


\section{Minimum dispersion motion primitive graphs} \label{sec:method}

In this section, we detail our method to \emph{approximately} solve Problem 1 by constructing a minimum dispersion graph $\mathcal{G}(\setV,\setE)$. At a high level, we use Algorithm \ref{alg:disp} to numerically approximate dispersion in the metric space defined by $\bar{J}$, and iteratively choose samples to add to $\setV$ that reduce this dispersion. Though this is computationally expensive and involves computing large numbers of optimal steer function, this all happens \emph{offline} with respect to planning. Algorithm \ref{alg:disp} outputs the motion primitive graph, which is stored and reconstructed for fast \emph{online} use with graph search planning.




\begin{figure}
	\centering
	\includegraphics[width=.9\columnwidth]{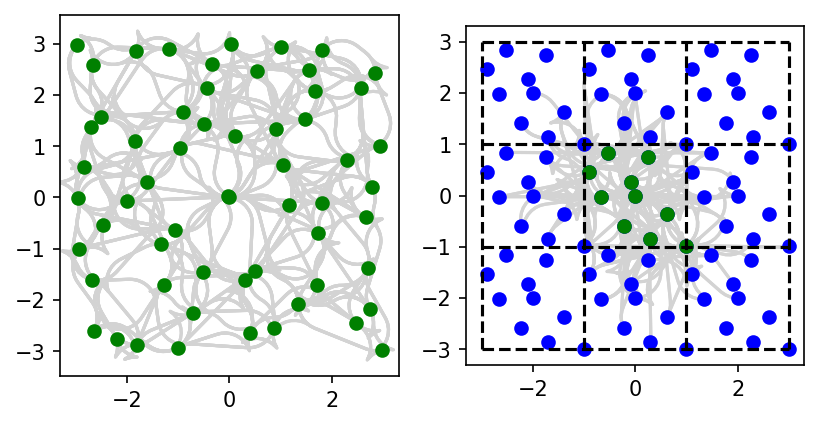}
	\includegraphics[width=0.5\columnwidth]{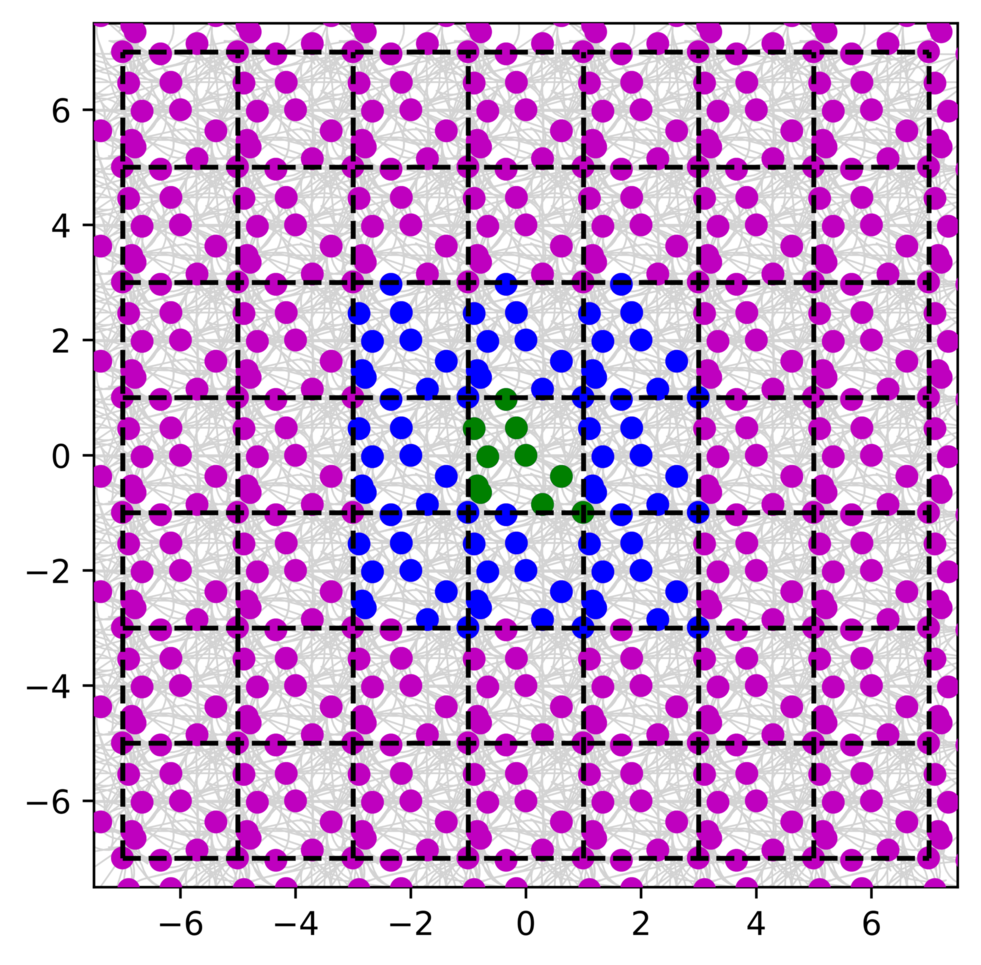}

	\caption{In all three plots, the green dots are the vertices $\setV$ output by Algorithm \ref{alg:disp}, projected onto the configuration space. The dynamically feasible trajectories representing edges $e \in \setE$ that have cost $< 2d$ (refer to section \ref{sec:proof}), are represented in light grey. The system used in this example is the Reeds-Shepp car (state x,y, and $\theta$), and the dispersion is 1.05. (top left) The samples are chosen according to Algorithm \ref{alg:disp}, but without line 4, the tiling step, s.t. $\setX_{tile} = \setV$. 
	(top right)  The samples $\setV$ are chosen according to Algorithm  \ref{alg:disp}. The blue dots are the `tiled' vertices, $\setX_{tile}$, and the dotted lines represent the boundaries of each translated copy of $\setV$.
	(bottom) The same graph is shown with even more tiled copies of the graph, as will be used at planning time. The pink samples are not considered during Algorithm \ref{alg:disp}.
	}
	\label{fig:tiling}
	\vspace{-5mm}

\end{figure}

\begin{algorithm}

    \caption{Minimum Dispersion Vertices}
    $\setXdense$, a low-discrepancy sampling of the \textit{state space} \\
    $\setV$, the output minimum dispersion vertices \\
    $\setX_{tile}$, $\setV$ translated in all spatial dimensions \\
    $\mathbf{J_{min}}[\vx_s]$, min. cost from $\vx_s \in \setXdense$ to any $\vx_t \in \setX_{tile}$ \\
    $D$, target dispersion 
    \begin{algorithmic}[1] 
        \Procedure{MinimumDispersionVertices}{$D$}
        \State $\vv \leftarrow zeroState() $,  $\setV \leftarrow [\vv ] $, $d \leftarrow \infty$
        \While{$d > D$}
        \State \textcolor{black}{$\setX_{tile} \leftarrow tilePoints(\setV)$}

        \ForAll{$\vx_s \in \setXdense$}
        \ForAll{$\vx_t \in \setX_{tile}$}
        \State {$j \leftarrow \max ({J(\vx_s,\vx_t)},{J(\vx_t,\vx_s)})$}
        \State $\mathbf{J_{min}}[\vx_s] \leftarrow \min(\mathbf{J_{min}}[\vx_s],j)$
        \EndFor
        \EndFor
        \State $\vv \leftarrow \arg\max \mathbf{J_{min}}[\vx_s]$, $append(\setV, \vv) $\;
        \State $d \leftarrow \max{\mathbf{J_{min}}[\vx_s]}$
        \EndWhile
        \State \Return $\setV$

        \EndProcedure
    \end{algorithmic}
    \label{alg:disp}
\end{algorithm}

\subsection{Computing Dispersion Numerically}
Given a set of samples and a metric, in general it is not possible to analytically compute dispersion \cite{nied}. Choosing a set of samples to minimize the dispersion, given the cost function is equally analytically intractable. These roadblocks lead to a numerical approach in order to compute dispersion and to generate a minimum dispersion sample set. 

To approximate dispersion numerically, we take a dense set of samples from the state space $\setXdense$, and compute trajectories from each of these points to all points in the sample set, as well as in the reverse direction. This allows us to compute the metric $\bar{J}$. Following from the continuous definition of dispersion described in Section \ref{sec:dispersion}, the discrete approximation of dispersion is that for every state in $\setXdense$, there exists a trajectory to and from $\setV$ that has cost less than the dispersion.

\subsection{Minimum Dispersion Vertex Selection}
The underlying structure of the dispersion optimization algorithm in \cite{Palmieri2019} is adapted in this work into Algorithm \ref{alg:disp}, but with key differences that allow us to plan for a wide class of dynamical systems, instead of only for driftless dynamical systems, and over arbitrary distances instead of a fixed size workspace. As in the prior work, the vertices $\setV$ are selected from a more numerous set $\setXdense$ over the Euclidean state space. The dispersion in $\bar{J}$ is then computed, and the dense sample at the highest dispersion state is greedily added to $\setV$.  Due to the greediness of Algorithm \ref{alg:disp}, it does not provide a true global optimum. However, as we can see in Figure \ref{fig:heatmap2}, the algorithm does significantly reduce dispersion compared to our baseline planner's sampling; which helps our planner completeness as we describe later. 
The rest of this section details how Algorithm \ref{alg:disp}  modifies \cite{Palmieri2019} to address non-symmetric costs and infinite workspaces. 


\subsubsection{New metric for non-symmetric systems}
Owing to the way we defined dispersion with $\bar{J}$ in Section \ref{sec:dispersion}, we must compute trajectories both from the vertices to the dense sampling, and vice versa. The cost between the states is the maximum of the costs of these two trajectories, as we defined in section \ref{sec:dispersion}. 

\subsubsection{Choosing $\setXdense$}
$\setXdense$ is chosen to be a low-discrepancy Sobol sequence \cite{nied}. In the Euclidean metric spaces, we already have a way to choose low dispersion points. Deterministic low discrepancy sequences imply low dispersion \cite{LaValle2006}. By using a sampling that is lower dispersion than uniform sampling in the Euclidean metric, we also reduce $\bar{J}$ dispersion in $\setX_{dense}$, of which $\setV$ is a subset.

\subsubsection{Tiling - Creating a Graph in an Unbounded Configuration Space}
Due to our system property that the system dynamics are \emph{not} a function of position, a minimum dispersion graph in a \emph{finite} configuration space can be reused to plan in an \emph{infinite} configuration space. In order to accomplish this while maintaining our dispersion value, when we compute dispersion we consider `outgoing' trajectories from the graph that map back to vertices on copies of the initial graph that have been translated in space. 

The `tiled vertices set' $\setX_{tile}$ is constructed by translating the entire set $\setV$ every positive and negative combination of one bounding box length in the spatial dimensions (in Algorithm 1 this is called $tilePoints$. This is visualized in Figure \ref{fig:tiling}, where the green and blue vertices combined comprise $\setX_{tile}$.

When we compute dispersion in the midst of Algorithm \ref{alg:disp}, we consider these tiled vertices the same as members of $\setV$. This preserves our approximation of dispersion inside and between each tile, since we compute dispersion with the `next tile' over in the calculation. 

An illustration of the utility of this decision is shown in Figure \ref{fig:tiling}. Both of the plots in the example shown achieve the same dispersion, but the tiled version requires maintaining a dictionary of 10 vertices and 224 edges, while the non-tiled version requires 60 vertices and 668 edges. However, this does come at some cost, since the tiled version now has about 22 edges per vertex (the branching factor), compared to 11 for the non-tiled one, which can negatively impact planning time. However, the most important outcome from this decision is that we are no longer confined to the bounded configuration space required by the dispersion algorithm. Our graph is transformed to be  \emph{implicit} instead of \emph{explicit} one, usable over infinite configuration spaces.


\begin{figure}
	\centering
	\includegraphics[width=0.55\columnwidth]{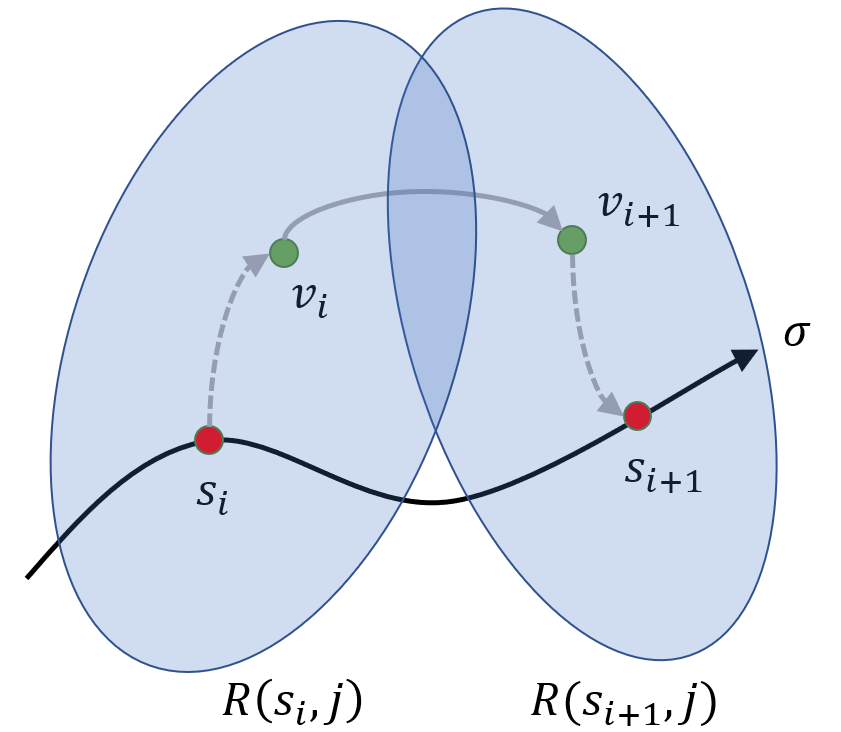}
	\caption{Reference for section \ref{sec:proof}, the proof of completeness. If there exist vertices `nearby' to the (unknown but assumed to exist) optimal path $\sigma$, they are connected by an edge in our graph, and that space is inside $\setX_{free}$ our planner can achieve completeness.}
	\label{fig:proof}
	\vspace{-5mm}

\end{figure}

\subsection{Minimum Dispersion Graph Search} \label{sec:graph_search}
Following Algorithm \ref{alg:disp}, we have generated a set of vertices $\setV$ that we store. As we will justify in the following section, we construct our final motion primitive graph $\gve$ by adding edges to $\setE$ for \emph{all} pairs of vertices that have cost in the metric less than twice the dispersion. With this graph, we want to solve motion planning problems to output feasible trajectories $\sigma(t)$that go from a $\vx_{start}$  to $\vx_{goal}$ over time $t_0$ to $t_f$.

Given a planning query $(\vx_{start},\vx_{goal})$, the only modification we need to using classical graph search is accessibility and departability to our $\gve$.
\subsubsection{Accessibility} Given the start state $\vx_{start}$, we normalize the position to zero, and use the steer function to connect to all $\setV$, and add these neighbors to the open list.
\subsubsection{Departability} Since we have access to the steer function, we can check if each open node can be connected directly to the goal state during the graph search. To avoid online computation of the steer function, we can alternatively define a terminating state set enclosed by the backwards reachable set of $\vx_{goal}$ up to cost $d$.



\subsection{Completeness} \label{sec:proof}
We have now accomplished our aim of designing an algorithm to output minimum dispersion vertices $\setV$, but are still left with the selection of edges $\setE$. We are motivated by a desire for planner completeness, to find a plan if one exists, and terminate our graph search if one does not. Due to discretization, we will never be complete in every case (for that we would need to be able to visit every $\vx \in \setX$), so we aim to quantify under what conditions we can guarantee that our planner is complete. The immediate consequence of this analysis will be that we will connect any two vertices in $\setV$ if their edge cost is less than twice the dispersion, and that we are not guaranteed to find a plan if the true optimal plan is too close to an obstacle.

In order to analyze completeness, we must now begin to consider the obstacles contained in $\setX$. To relate our system to these obstacles, we use a new quantity as defined in \cite{Palmieri2019}, the clearance $\delta$ of a trajectory $\sigma(t)$.
\begin{align}
 \delta(\sigma(t)) = sup \{ j \mid R(\vx,j) \subseteq \setX_{free} \forall \vx \in \sigma(t) \}
\end{align}

We now can investigate the relationship between clearance, dispersion, and completeness. We prove a theorem that says that with a optimal path clearance of twice the dispersion, and a graph with all vertices with cost less than twice the dispersion connected, our graph search is complete. In other words, clearance tells us that a tube around the optimal plan is free, dispersion of the vertex set tells us that our motion planning graph has samples in that tube, and defining connection rules for the vertices (edges) allow us to ensure that we can combine these vertices into a trajectory. Figure $\ref{fig:proof}$ illustrates the the mechanism of the proof. 
\begin{theorem}
Suppose $\vx_{start}$ and $\vx_{goal}$ can be connected by a motion plan $\sigma$ with clearance $\delta > 0$. If we have a graph $\gve$ which has dispersion $d(\setV) < \delta/2$ and all vertices $\vv_i,\vv_j$ with $J(\vv_i,\vv_j) < \delta$ are connected by an edge, a graph search through $\gve$ with accessibility and departability as defined in Section \ref{sec:graph_search}, a graph search through $\gve$ will find a path from $x_{start}$ to $x_{goal}$ through free space.
\end{theorem}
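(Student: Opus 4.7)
The plan is a constructive argument: produce an explicit collision-free path in $\gve$ from $\vx_{start}$ to $\vx_{goal}$, so that any sound graph search must return at least one such path, and completeness follows.

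First, I would discretize $\sigma$ into waypoints $\vx_{start} = \vx_0, \vx_1, \ldots, \vx_k = \vx_{goal}$ whose consecutive costs along $\sigma$ are at most $\epsilon := \delta - 2d$, which is strictly positive because $d < \delta/2$. For each waypoint, the dispersion assumption furnishes a vertex $\vv_i \in \setV$ with $\bar{J}(\vx_i, \vv_i) \leq d$, so that both $J(\vx_i, \vv_i) \leq d$ and $J(\vv_i, \vx_i) \leq d$. The quasimetric triangle inequality then gives
\[
J(\vv_i, \vv_{i+1}) \leq J(\vv_i, \vx_i) + J(\vx_i, \vx_{i+1}) + J(\vx_{i+1}, \vv_{i+1}) \leq 2d + \epsilon = \delta,
\]
so the connection rule places each $(\vv_i, \vv_{i+1})$ in $\setE$. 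The accessibility and departability rules provide the terminal edges $(\vx_{start}, \vv_0)$ and $(\vv_k, \vx_{goal})$ of cost at most $d$, closing the chain into a full path in $\gve$.

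The substantive step is to verify that the trajectory realizing each edge lies in $\setX_{free}$. For any $\vy$ on the optimal edge trajectory from $\vv_i$ to $\vv_{i+1}$, my approach is to exhibit a reference point $\vx^\ast \in \sigma$ with $J(\vx^\ast, \vy) \leq \delta$, so that clearance, $R(\vx^\ast, \delta) \subseteq \setX_{free}$, places $\vy$ in $\setX_{free}$. Splitting the edge at its $J$-midpoint via the Bellman identity $J(\vv_i, \vy) + J(\vy, \vv_{i+1}) = J(\vv_i, \vv_{i+1}) < \delta$, on the first half of the edge one has $J(\vv_i, \vy) \leq \delta/2$, and then
\[
J(\vx_i, \vy) \leq J(\vx_i, \vv_i) + J(\vv_i, \vy) \leq d + \delta/2 < \delta
\]
exactly by the hypothesis $d < \delta/2$, so taking $\vx^\ast = \vx_i$ works. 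The terminal edges anchored at $\vx_{start}$ and $\vx_{goal}$ are immediate since they sit inside forward cost-$d$ neighborhoods of the endpoints of $\sigma$.

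The main obstacle I anticipate is the symmetric argument for the second half of each interior edge in the fully nonsymmetric setting: anchoring at $\vx_{i+1}$ would require controlling $J(\vx_{i+1}, \vy)$, but the Bellman identity combined with dispersion only supplies $J(\vy, \vv_{i+1}) \leq \delta/2$ and $J(\vv_{i+1}, \vx_{i+1}) \leq d$, which point the wrong way in the quasimetric. I would resolve this either by tightening the waypoint spacing to $\epsilon := \delta - 3d$ (so $J(\vx_i, \vy) \leq d + J(\vv_i, \vv_{i+1}) \leq \delta$ uniformly along the edge and no midpoint split is needed, at the cost of a stronger hypothesis $d < \delta/3$), or by reading the clearance condition through $\bar{J}$ to match the $\bar{J}$-based dispersion so that forward and backward reachable tubes are both in free space. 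Once that bookkeeping is closed, the constructed collision-free path through $\vx_{start}, \vv_0, \ldots, \vv_k, \vx_{goal}$ lies in $\gve$, and any complete graph search returns a valid plan.
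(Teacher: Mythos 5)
Your proposal follows essentially the same route as the paper: discretize $\sigma$ finely in cost, use the dispersion bound to attach a vertex $\vv_i$ with $\bar{J}(\vv_i,\vs_i)\le d$ to each waypoint, chain the triangle inequality to show consecutive vertices are joined by an edge, and then use clearance plus a cost-splitting of each edge trajectory to certify collision-freeness. The one bookkeeping difference is that you fix $\epsilon=\delta-2d$ up front, whereas the paper chooses $\epsilon$ so that no pair of vertices has cost in $(\delta,\delta+\epsilon]$ (possible since $\setV$ is finite), which is how it converts the bound $J(\vv_i,\vv_{i+1})\le\delta+\epsilon$ into an actual edge under the strict connection rule $J<\delta$; your version leaves a harmless boundary case at $J=\delta$ that the same device, or a slightly smaller $\epsilon$, removes. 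More importantly, the obstacle you flag on the second half of each interior edge is genuine and is present in the paper's own proof: the paper asserts $J(\vv_{i+.5},\vs_{i+1})\le\delta$, which shows $\vs_{i+1}\in R(\vv_{i+.5},\delta)$, not that $\vv_{i+.5}$ (or any point past the cost midpoint) lies in $R(\vx^*,\delta)$ for some $\vx^*$ on $\sigma$, which is what the stated forward-reachable-set definition of clearance requires. Of your two repairs, reading clearance through $\bar{J}$ --- i.e., requiring both the forward and the backward cost-$\delta$ reachable sets of every point on $\sigma$ to lie in $\setX_{free}$ --- is the better one: it preserves the theorem's constants and mirrors the paper's symmetric treatment of dispersion, whereas the $d<\delta/3$ fix needlessly strengthens the hypothesis. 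For symmetric systems such as the Reeds--Shepp car the issue vanishes and both arguments go through as written.
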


\begin{proof}
Let clearance $\delta > 0 $ and let $\gve$ be a motion primitive graph with dispersion $d(\setV) \leq \delta/2$, and let all pairs of vertices with cost less than $2d$ be connected. Let $\vx_{start},\vx_{goal}$ be connected with a motion plan $\sigma(t)$ with clearance $\delta(\sigma) \geq \delta$. Let $0 < \epsilon < \delta/2$ such that there are no pair of vertices in the motion planning graph with $\delta < J(\vv_i,\vv_j) \leq \delta + \epsilon$. We define a set of $n$ discrete states on this motion plan $s_i = \sigma(t_i)$ such that the first state $\vs_1$ is the start, the last state $\vs_n$ is the goal, and the intermediate states are separated by a sufficiently small cost, such that $J(\vs_i,\vs_{i+1}) < \epsilon$. 

First, we will check that our motion planning graph has vertices nearby to the optimal trajectory $\sigma$ and that they are connected. Since dispersion $d(\setV) \leq \delta/2$,  $\forall s_i, \exists v_i ~s.t.~ \bar{J}(v_i, s_i) \leq \delta/2$. By the triangle inequality (refer to Figure \ref{fig:proof}), 
\begin{align*}
J(\vv_i,\vv_{i+1}) & \leq J(\vv_i,\vs_i) + J(\vs_i,\vs_{i+1}) + J(\vs_{i+1},\vv_{i+1}) \\
 & \leq \bar{J}(\vv_i,\vs_i) + J(\vs_i,\vs_{i+1}) + \bar{J}(\vs_{i+1},\vv_{i+1}) \\
 & \leq \delta/2 + \epsilon + \delta/2 = \delta + \epsilon 
\end{align*}
Therefore, from our construction of $\epsilon$,  $J(\vv_i,\vv_{i+1}) < \delta$, so $\vv_i$ and $\vv_{i+1}$ must also be connected by an edge to the graph. 

Next, we check that these edges are also contained in free space. We know that the region around each state $\vs_i$ is free because of our definition of the clearance: the obstacle free regions $R(\vs_i,\delta)$ and $R(\vs_{i+1},\delta)$ are contained in free space. If we bisect in cost the trajectory from $\vv_i$ to $\vv_{i+1}$, each half has cost $\delta/2$. Therefore, $J(\vs_i,\vv_{i+.5})$ and $J(\vv_{i+.5}, \vs_{i+1})$ are both $\leq \delta$. Therefore, the path from $v_i$ to $v_{i+1}$ is also fully contained in free space. Similarly, from the start $\vs_1$ to the first vertex in the motion planning graph $\vv_1$ and from the goal $\vs_n$ to the last vertex in our planned sequence $\vv_n$ are also contained in free space due being contained in $R(\vs_1,\delta)$ and $R(\vs_n,\delta)$, respectively, giving us accesibility and departibility to the graph.

Therefore, a motion plan following the sequence $\vx_{start}, \vv_1, \dots, \vv_n, \vx_{goal}$ will be found by the graph search planner, and is in free space.

\end{proof}

We can also note that in the limit as dispersion goes to zero, our path will be the optimal path, since $v_i$ will equal $s_i$.

\begin{figure}
	\centering
	\includegraphics[width=0.99\columnwidth]{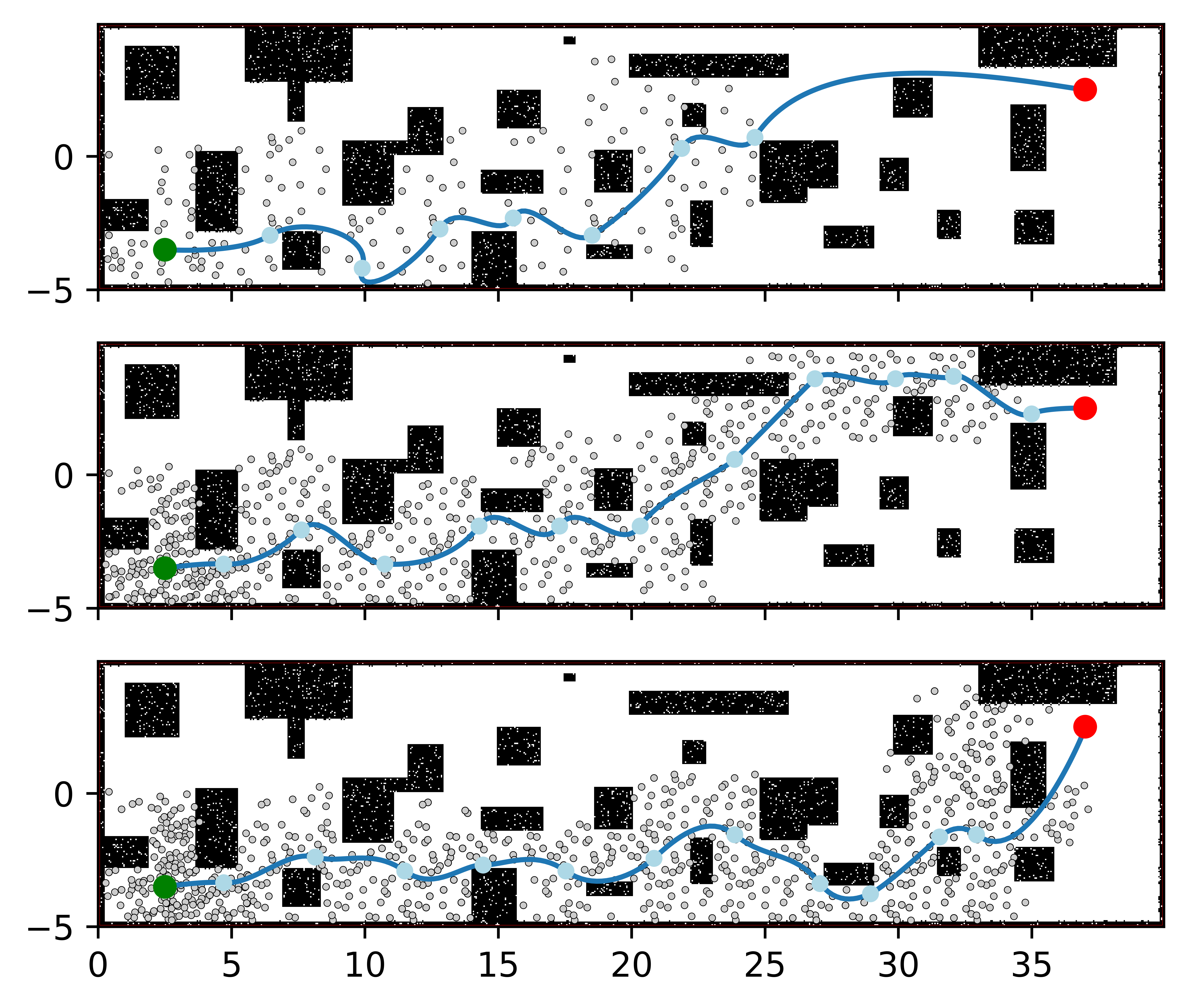}
	\caption{This figure shows the graph search planning results for the quadrotor system over three motion primitive graphs constructed with our method, but with variable dispersion. The green circles represent vertices in the graph, and the grey dots are other states added to the open list of the A* algorithm. We can see that the lowest dispersion plan is able to more optimally go through a narrow corridor that the other two miss, at the expense of more computation. (top) Dispersion = 134, path cost = 2345, number of collision checks = 599 (middle) Dispersion = 112, path cost = 2300, number of collision checks = 1851 (bottom) Dispersion = 104, path cost = 2281, number of collision checks = 3954.}
	\label{fig:planning}
	\vspace{-5mm}
\end{figure}

\section{Evaluation}

\subsection{Systems Considered}
To evaluate the planner described in Section \ref{sec:method}, we primarily consider two systems for which we have access to the steering function, the Reeds-Shepp Car and the quadrotor double integrator model. The quadrotor system is a non-symmetric system with drift.

\subsubsection{Reeds-Shepp Car}
The Reeds-Shepp car \cite{Reeds1990} is a simple kinematic car model with bounded turning. It has a three-dimensional state $(x,y,\theta)$. $J(\vx_i,\vx_j)$ for the RS-car is defined to be the path length. It is a \emph{symmetric} system, $J(\vx_i,\vx_j) = J(\vx_j,\vx_i)$, so our more expansive definition of dispersion is incidental. It is useful for illustration purposes since it is lower dimensional and symmetric. Its analytical optimal steering function is described in \cite{LaValle2006}.

\subsubsection{Quadrotor Double Integrator} \label{sec:quadrotor}
We use the same dynamical system and cost function as \cite{Liu2017}, in order to make a direct comparison to it as a baseline for search-based planning with motion primitives. 
The system and its cost functional are defined as follows, with $\rho$ as a user set constant:
\begin{align}
\dot{\vx} &= A\vx+B\vu\nonumber,\\
A &= \begin{bmatrix}
    \mathbf{0} & \mathbf{I}_2 \\
    \mathbf{0} & \mathbf{0}
    \end{bmatrix}, \quad B = \begin{bmatrix} \mathbf{0} \\ \mathbf{I}_2\end{bmatrix}
\end{align}
\begin{align}
    L(\vx(t), \vu(t), t) =  \|u(t)\|^2 + \rho t 
\end{align}

Since Algorithm 1 requires that we do not fix the duration of the motion primitive, the optimal steering function is computed by solving a bi-level trajectory optimization problem. The inner optimization problem is a standard QP trajectory optimization, and the outer problem is a one dimensional line search over the trajectory duration.
\subsection{Planning Results}
To evaluate the utility and efficacy of using Algorithm \ref{alg:disp} to construct graphs for motion planning, we examine the effect of changing dispersion, and compare dispersion and number of graph search collision checks (as a non-hardware specific comparison of computation time) as compared to baseline motion primitives as described in \cite{Liu2017}, with uniform sampling in the input space over regular time segments.

\subsubsection{Effect of increasing dispersion}
As we can see in Figure \ref{fig:planning}, reducing dispersion in $\setV$ leads to a more complete algorithm, with the top graph having only barely enough samples to find a plan in this map, while the bottom sampling is much more dense. Additionally, the trajectories become more optimal as we reduce the dispersion of $\setV$. Our method provides a single parameter to tune to increase both completeness and optimality of the planner, at the expense of more graph search computation, which causes slower planning time. In a practical context, we could generate graphs for many values of dispersion offline, and iteratively reduce online dispersion until we find a trajectory.

\subsubsection{Comparison to search-based planning with uniform-input sampling motion primitives}
Figure \ref{fig:heatmap2} shows a comparison of cost from a point in the state space to the set $\setV$ for our method as compared to the baseline method for a similar number of vertices and similar overall dispersion. As we can see, more states in our version are nearer to a state in $\setV$. The advantage in this is connected to our completeness proof in section \ref{sec:proof}; more of our vertices are nearby to states that may be on the optimal path.

Figure \ref{fig:planning2} shows a direct comparison between planning with uniform input sampling versus planning with our method, with equivalently optimal output paths. We can see the in this case that the uniform sampling expands many more states (leading to more collision checks and computational burden), though it arrives at an equally optimal outcome. One explanation for these results is that uniform input sampling oversamples high input states, since the cost function includes the norm of the input, leading to many nodes which are never expanded. 

 Table \ref{tab:comparisons} presents the averaged results of planning with both methods on twenty randomized but similar in difficulty maps (not pictured). They both use the same simple heuristic and termination conditions. This shows how our method provides a single parameter to change if a plan is not found to increase completeness and optimality, dispersion. By contrast, uniform input sampling requires setting both the time duration of the motion primitive, and the branching factor of the graph, which are non-intuitive to tune.  Many combinations of branching factor and time duration do not find paths in any or all of the maps, as shown by the rows with `N/A'. This includes when the algorithm maxed out and terminated at 100,000 collision checks. By contrast, our method finds a path in every map, with fewer collision checks, but the most optimal plan on the table belongs to the uniform input sampling. However, this could likely be achieved by generating an even lower dispersion graph.

\begin{figure}
	\centering
	\includegraphics[width=0.9\columnwidth]{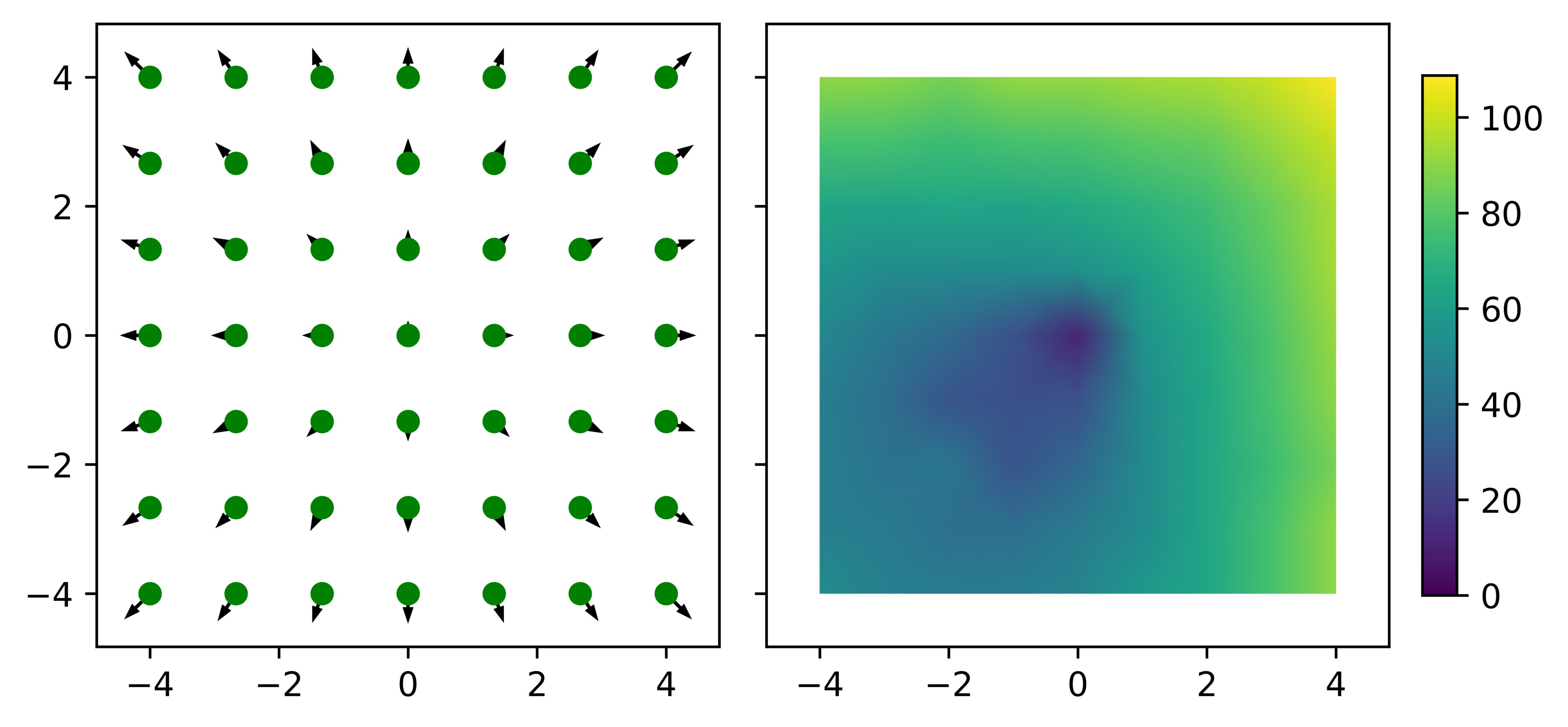}
	\includegraphics[width=0.9\columnwidth]{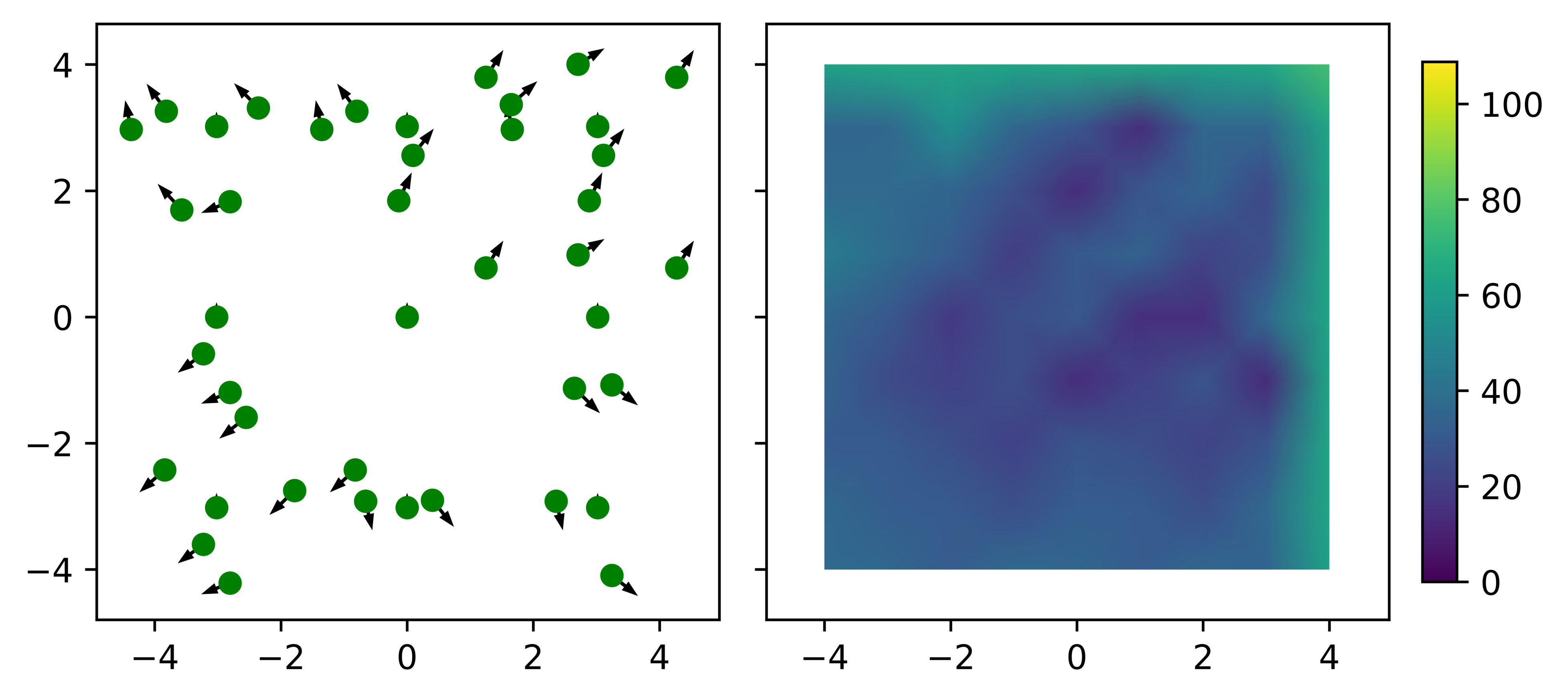}

	\caption{Vertices projected into the configuration space for a quadrotor system motion primitive graph computed with uniform input sampling over a constant time horizon as in \cite{Liu2017}, with their velocities represented by the arrows. The graph is represented at a depth of 1 from a seed vertex at the zero state. (top left)  Vertices output from Algorithm \ref{alg:disp} at a depth of 1 from a seed vertex at the zero state. (bottom left) Minimum cost trajectories from $\setV$ to a slice of $\setXdense$ with coordinates $(x,y,1,1)$ are used to compute a cost surface. $\setV$ is defined by the vertices of the plots on the left. (top right and bottom right).}
	\label{fig:heatmap2}
	\vspace{-5mm}

\end{figure}



\newcolumntype{L}{>{\raggedright\arraybackslash}X}
\begin{table}

  \begin{center} 
    \caption{Planning Comparison on 20 random maps with similar narrow corridor sizes}
   \label{tab:comparisons}
    \begin{tabularx}{\columnwidth}{|L|L||L||L|}

    \hline
    \multicolumn{4}{|c|}{\textbf{Minimum Dispersion Primitives}} \\ \hline
     \multicolumn{2}{|c||}{\textbf{Dispersion}} & \textbf{Avg. \# Collision Checks} & \textbf{Avg. Cost}\\ \hline
       \multicolumn{2}{|c||}{84}  & 3913 & 2897 \\\hline
      \multicolumn{2}{|c||}{95}  & 3361 & 2872 \\\hline
      \multicolumn{2}{|c||}{115}  & 2302 & 3250 \\ \hline
      \multicolumn{2}{|c||}{128} & 1612 & 3332 \\ \hline
      \multicolumn{2}{|c||}{145} & 683 & 3732 \\ \hline
      \multicolumn{4}{|c|}{\textbf{Uniform Input Sampling Primitives}} \\ \hline
      \textbf{Time Duration (s)}& \textbf{Branching Factor (per dimension)} &\textbf{\# Collision Checks} & \textbf{Cost}\\ \hline
      .1 & 3 & $>$100000 & N/A \\ \hline
      .1 & 4 & 8729 & 2525 \\ \hline
      .1 & 5 & $>$100000 &  N/A\\ \hline
      .3 & 3 & 30492 & 2752 \\ \hline
      .3 & 4 & $>$100000 & N/A \\ \hline
      .3 & 5 & $>$100000 & N/A\\ \hline
      .5 & 3 & 3780 &  3183 \\ \hline
      .5 & 4 &  $>$100000 & N/A \\ \hline
      .5 & 5 &  $>$100000 & N/A \\ \hline

    \end{tabularx}

  \end{center}
  	\vspace{-5mm}

\end{table}

\section{Conclusion}
In this paper we have presented a practical application of dispersion to motion planning with dynamically feasible graphs. This method applies to the wide class of systems for which we can solve boundary value problems in free space, and enables us to plan long, dynamically feasible trajectories in cluttered environments.

\FloatBarrier

\bibliographystyle{ieeetr}
\bibliography{motion_primitives}
\end{document}